\def\1{\bm{1}}
\def\vmu{{\bm{\mu}}}
\def\vtheta{{\bm{\theta}}}
\def\vh{{\bm{h}}}
\def\vx{{\bm{x}}}
\def\vy{{\bm{y}}}
\def\evh{{h}}
\def\mG{{\bm{G}}}
\def\mX{{\bm{X}}}
\DeclareMathAlphabet{\mathsfit}{\encodingdefault}{\sfdefault}{m}{sl}
\SetMathAlphabet{\mathsfit}{bold}{\encodingdefault}{\sfdefault}{bx}{n}
\newcommand{\E}{\mathbb{E}}
\newcommand{\R}{\mathbb{R}}
\newcommand{\Var}{\mathrm{Var}}
\DeclareMathOperator*{\argmin}{arg\,min}
\newtheorem{thm}{Theorem}
\newtheorem{defi}{Definition}
\def\ie{\textit{i.e.,~}}
\def\etal{{\em et al.}}
\newcommand{\figref}[1]{Fig.~\ref{#1}}
\newcommand{\tabref}[1]{Tab.~\ref{#1}}
\newcommand{\equref}[1]{Eqn.~\ref{#1}}
\newcommand{\secref}[1]{Sec.~\ref{#1}}
\newcommand{\thmref}[1]{Thm.~\ref{#1}}
\newcommand{\algref}[1]{Alg.~\ref{#1}}
\newcommand{\stepref}[1]{Step~\ref{#1}}
\definecolor{hyperref-blue}{RGB}{0,0,200}
\definecolor{hyperref-green}{RGB}{0,150,0}
\definecolor{hyperref-red}{RGB}{200,0,0}
\ifcvprfinal\pagestyle{empty}\fi
\begin{document}

\title{AdaSample: Adaptive Sampling of Hard Positives for Descriptor Learning}

\author{Xin-Yu Zhang\textsuperscript{1} \quad
Le Zhang\textsuperscript{2} \quad
Zao-Yi Zheng\textsuperscript{1} \quad
Yun Liu\textsuperscript{1} \quad
Jia-Wang Bian\textsuperscript{3} \quad
Ming-Ming Cheng\textsuperscript{1} \quad \\
\textsuperscript{1}TKLNDST, CS, Nankai University~~~~ 
\textsuperscript{2}I2R, A*STAR~~~~
\textsuperscript{3}The University of Adelaide\\
}

\maketitle

\begin{abstract}
Triplet loss has been widely employed in a wide range of
computer vision tasks, including local descriptor learning.
The effectiveness of the triplet loss heavily relies on the
triplet selection, in which a common practice is to first sample intra-class patches (positives)
from the dataset for batch construction
and then mine in-batch negatives to form triplets.
For high-informativeness triplet collection, 
researchers mostly focus on mining hard negatives in the second stage,
while paying relatively less attention to constructing informative batches.
To alleviate this issue, we propose \textit{AdaSample}, 
an adaptive online batch sampler, in this paper.
Specifically, hard positives are sampled based on their \textit{informativeness}.
In this way, we formulate a hardness-aware positive mining pipeline within a novel 
\textit{maximum loss minimization} training protocol.
The efficacy of the proposed method is evaluated on several standard benchmarks,
where it demonstrates a significant and consistent performance gain 
on top of the existing strong baselines.
The source codes will be released upon acceptance.
\end{abstract}

\section{Introduction}
Learning discriminative local descriptors from image patches
is a fundamental ingredient of various computer vision tasks,
including structure-from-motion~\cite{agarwal2009building}, 
image retrieval~\cite{philbin2010descriptor},
and panorama stitching~\cite{brown2007automatic}.
%
%
Conventional approaches mostly utilize hand-crafted descriptors, such as SIFT \cite{lowe2004distinctive},
which have been successfully employed in a variety of applications.
Recently, with the emergence of large-scale annotated datasets \cite{brown2011discriminative,balntas2017hpatches},
data-driven methods have started to demonstrate their effectiveness,
and learning-based descriptors have gradually dominated this field.
Specifically, convolutional neural network (CNN) based descriptors
\cite{han2015matchnet,zagoruyko2015learning,yi2016lift,tian2017l2,mishchuk2017working,sosnet2019cvpr} can achieve state-of-the-art performance 
on various tasks, including patch retrieval and 3D reconstruction.

Notably, \textit{triplet loss} is adopted in many well-performing descriptor
learning frameworks.
Nevertheless, the quality of the learned descriptors heavily relies on the triplet selection,
and mining suitable triplets from a large database is a challenging task.
Towards this challenge, Balntas \etal~\cite{balntas2016learning} propose an in-triplet hard negative mining strategy called anchor swapping.
Tian \etal~\cite{tian2017l2} progressively sample unbalanced training pairs in favor of negatives,
and Mishchuk \etal~\cite{mishchuk2017working} further simplify this idea to mine the 
hardest negatives within the mini-batch.
Despite the significant progress on performance and generalization ability,
however, two potential problems still exist in the current hardest-in-batch sampling solution:
i) hard negatives are mined in the batch level, while randomly selected 
matching pairs can still be easily discriminated by the descriptor network;
ii) it does not take the interaction between the training progress 
and the hardness of the training samples into consideration.
To this end, we propose a novel triplet mining pipeline
to adaptively construct high-informativeness batches in a principled manner.

Our proposed method is nominated as \textit{AdaSample}, 
in which matching pairs are sampled from the dataset based on 
their \textit{informativeness} to construct mini-batches.
The methodology is developed on informativeness analysis,
where \textit{informativeness} is defined via the contributing
gradients of the potential samples and can assist estimate 
their optimal sampling probabilities.
Moreover, we propose a novel training protocol inspired by 
\textit{maximum loss minimization}~\cite{ShalevShwartz2016MinimizingTM} 
to boost the generalization ability of the descriptor network.
Under this training framework, we can adaptively adjust the overall hardness
of the training examples fed to the network, based on the training progress.
Comprehensive evaluation results and ablation studies on several standard benchmarks \cite{brown2011discriminative,balntas2017hpatches} demonstrate the effectiveness of our proposed method.

In summary, our contributions are three-fold:
\begin{itemize}
    \item We theoretically analyze the \textit{informativeness} of potential
    training examples and formulate a principled sampling approach for
    descriptor learning.
    
    \item We propose a hardness-aware training protocol inspired by
    \textit{maximum loss minimization}, in which the overall hardness
    of the generated triplets are adaptively adjusted to match the training progress.
    
    \item Comprehensive evaluation results on popular benchmarks demonstrate
    the efficacy of our proposed \textit{AdaSample} framework.
\end{itemize}

\section{Related work}
\paragraph{Local Descriptor Learning.}
Traditional descriptors \cite{lowe2004distinctive,ke2004pca} mostly utilize hand-crafted features
to extract low-level textures from image patches.
The seminal work, \ie SIFT \cite{lowe2004distinctive}, computes the smoothed weighted histograms
using the gradient field of the image patch.
PCA-SIFT \cite{ke2004pca} further improves the descriptors by applying
Principle Component Analysis (PCA) to the normalized image gradient.
A comprehensive overview of the hand-crafted descriptors can be found in \cite{mikolajczyk2005performance}.

Recently, due to the rapid development of deep learning,
CNN-based methods enable us to learn feature descriptors directly from the raw image patches.
MatchNet~\cite{han2015matchnet} propose a two-stage Siamese architecture 
to extract feature embeddings and measure patch similarity,
which significantly improves the performance and demonstrates
the great potential of CNNs in descriptor learning.
DeepDesc \cite{simo2015discriminative} trains the network with Euclidean distance
and adopts a mining strategy to sample hard examples.
DeepCompare \cite{zagoruyko2015learning} explores various architectures of the
Siamese network and develops a two-stream network focusing on image centers.

With the advances of metric learning, triplet-based architectures have gradually
replaced the pair-based ones.
TFeat \cite{balntas2016learning} adopts the triplet loss and mines in-triplet 
hard negatives with a strategy named anchor swapping.
$\rm L2$-Net \cite{tian2017l2} employs progressive sampling and requires that
matching patches have minimal $\rm L2$ distances within the mini-batch.
HardNet \cite{mishchuk2017working} further develops the idea to mine the hardest-in-batch
negatives with a simple triplet margin loss.
DOAP \cite{he2018local} imposes a ranking-based loss directly optimized for the average precision.
GeoDesc \cite{luo2018geodesc} further incorporates the geometric constraints 
from multi-view reconstructions and achieves significant improvement on 3D 
reconstruction task.
SOSNet~\cite{sosnet2019cvpr} proposes a second-order similarity regularization term
and achieves more compact patch clusters in the feature space.
A very recent work \cite{zhang2019learning} relaxes the hard margin in the triplet margin loss 
with a dynamic soft margin to avoid manually tuning the margin by human heuristics.

From previous arts, we find that the triplet mining framework
can generally be decoupled into two stages,
\ie batch construction from the dataset and triplet generation within the mini-batch.
Previous works \cite{balntas2016learning,tian2017l2,mishchuk2017working} mostly
focus on mining hard negatives in the second stage,
while neglecting batch construction in the first place.
Besides, their sampling approaches do not take the training progress into consideration
when generating triplets.
Therefore, we argue that their triplet mining solutions still cannot exploit the full
potential of the entire dataset to produce triplets with suitable hardness.
To alleviate this issue, we analyze the contributing gradients of 
the potential training examples and sample informative matching pairs for
batch construction.
Then, we propose a hardness-aware training protocol inspired by \textit{maximum loss minimization},
in which the overall hardness of the selected triplets is correlated with the
training progress.
Incorporating the hardest-in-batch negative mining solution, 
we formulate a powerful triplet mining framework, \textit{AdaSample},
for descriptor learning, in which the quality of the learned
descriptors can be significantly improved by a simple sampling strategy.

\paragraph{Hard Negative Mining.}
Hard negative mining has been widely used in deep metric learning, 
such as face verification \cite{schroff2015facenet},
as it can progressively select hard negatives for triplet loss and Siamese networks 
to boost the performance and speed up the convergence.
FaceNet \cite{schroff2015facenet} samples semi-hard triplets
within the mini-batch to avoid overfitting
the outliers.
Wu \etal~\cite{wu2017sampling} select training examples based on their relative distances.
Zheng \etal~\cite{zheng2019hardness} augment the training data by adaptively synthesizing hardness-aware and label-preserving examples.
However, our sampling solution differs from them in that 
we analyze the \textit{informativeness} of the training data
and ensure that the sampled data can provide gradients contributing 
most to the parameter update.
Besides, our method can adaptively adjust the hardness of the selected training data
as training progresses.
In this way, well-classified samples are filtered out, 
and the network is always fed with informative triplets with suitable hardness.
Comprehensive evaluation results demonstrate consistent performance improvement
contributed by our proposed approach.

\section{Methodology}
\subsection{Problem Overview}
Given a dataset that consists of $N$ 
classes\footnote{
The term "class" stands for the image patches that come from the same 3D location.
For our sampling purpose, patches from a single class are matching, while non-matching pairs come from different classes.
}
with each containing 
$k$ matching patches, we decompose the triplet generation into two stages.
Firstly, we select $n$ matching pairs (positives) to form a mini-batch, 
where $n$ is the batch size.
This is done by our proposed \textit{AdaSample}, as introduced in \secref{sec:adasample}.
Secondly, we mine the hardest-in-batch negatives for each matching pair
and use the triplet loss to supervise the network training, as in \secref{sec:hardest-in-batch}.
See \figref{fig:netplot} for an illustration of the two-stage
sampling pipeline.
Finally, the overall solution is summarized in \secref{sec:angular}.

\begin{figure}[!tb]
  \centering
  \begin{overpic}[width=1\linewidth]{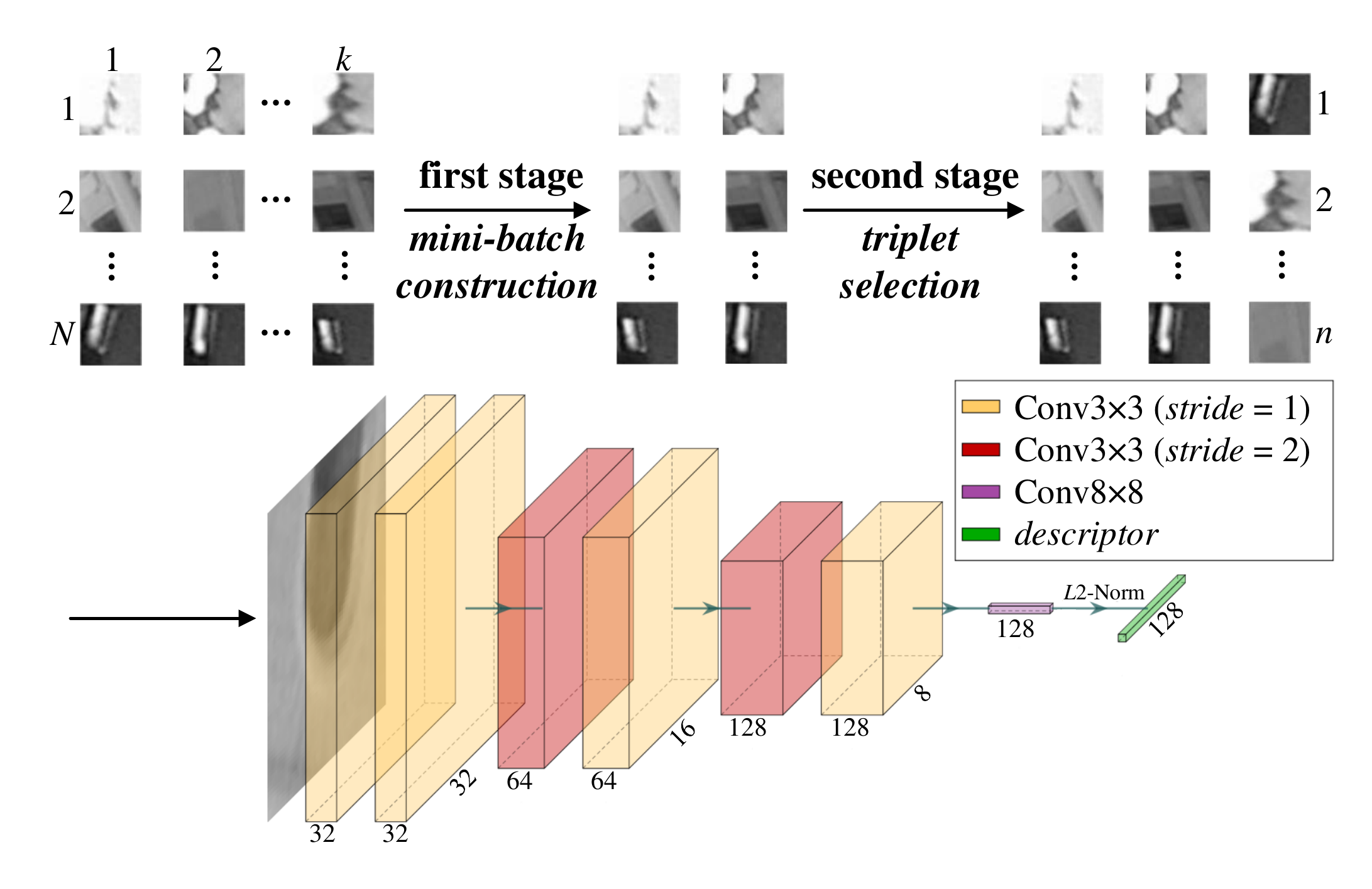}
  \end{overpic}
  \vspace{-3mm}
  \caption{Illustration of the two-stage descriptor learning pipeline.}
  \label{fig:netplot}
  \vspace{-4mm}
\end{figure}

\subsection{AdaSample}\label{sec:adasample}
Previous works~\cite{tian2017l2,mishchuk2017working} sample positives
randomly to construct mini-batches,
yielding a majority of similar matching pairs which can be 
easily discriminated by the network.
This practice may reduce the overall hardness of the triplets.
Motivated by the hardest-in-batch mining strategy in \cite{mishchuk2017working},
a straightforward solution is to select the most dissimilar matching pairs.
However, potential issues arise, 
\ie the network may be trained with bias in favor of 
the most dissimilar matching pairs, while other cases are less-considered.
We validate this solution, nominated as
\textit{Hardpos}, in experiments (\secref{sec:ablation}).

A more principled solution is to sample positives based on their \textit{informativeness}.
Here, we assume that informative pairs are those contributing
most to the optimization, namely, providing 
effective gradients for parameter updates.
Therefore, we quantify the \textit{informativeness} of 
the matching pairs by measuring their contributing
gradients during training.
Moreover, we employ \textit{maximum loss minimization}
\cite{ShalevShwartz2016MinimizingTM} to improve the generalization
ability of the learned model and show that the resulting
gradient estimator is an \textit{unbiased} estimator of the actual gradient.
In the following, 
we introduce our derivation and elaborate on the theoretical justification in~\secref{sec:theorey}.

\vspace{-3mm}
\paragraph{Informativeness Based Sampling.}
In the end-to-end deep learning literature, 
the training data contribute to optimization via gradients, 
so we measure the \textit{informativeness} of training examples 
by analyzing their resulting gradients.
Generally, we consider the generic deep learning framework.
Let $(\vx_i,\vy_i)$ be the $i^{th}$ data-label pair of the training set,
$f(\vx; \vtheta)$ be the model parameterized by $\vtheta$, and 
$\mathcal{L}(\cdot, \cdot)$ be a differentiable loss function.
The goal is to find the optimal model parameter $\vtheta^*$ that minimizes the average loss, \ie
\begin{equation}
    \vtheta^* = \argmin_{\vtheta} \frac{1}{K}\sum_{i=1}^K \mathcal{L}(f(\vx_i; \vtheta), \vy_i), \tag{1}\label{equ:theta-star}
\end{equation}
where $K$ denotes the number of training examples.
Then, we proceed with the following definition of \textit{informativeness}.
\begin{defi}
The informativeness of a training example $(\vx_i, \vy_i)$ is quantified by its resulting
gradient norm at iteration $t$, namely,
\begin{equation}
    {\rm info}(\vx_i, \vy_i) := || \nabla_{\vtheta_t} \mathcal{L}(f(\vx_i; \vtheta), \vy_i) ||_{2}.\tag{2}\label{equ:informativeness}
\end{equation}
\end{defi}
At iteration $t$, let $P_t = \{p_1^t, \dots, p_K^t\}$ be the sampling probabilities of each datum in the training set.
More generally, we also re-weight each sample by $w_1^t,\cdots,w_K^t$.
Let random variable $I_t$ denote the sampled index at iteration $t$, 
then $I_t \sim P_t$, namely, $\mathbb{P}(I_t = i) = p_i^t$.
We record the re-weighted gradient
induced by the training sample $(\vx_i,\vy_i)$ as
\begin{equation}
    \mG_i^t = w_i^t \nabla_{\vtheta_t} \mathcal{L}(f(\vx_i;\vtheta), \vy_i).\tag{3}\label{equ:g}
\end{equation}
For simplicity, we omit the superscript $t$ when no ambiguity is made.
By setting $w_i = \frac{1}{K p_i}$,
we can make the gradient estimator $\mG_i$ an unbiased estimator of the actual gradient, \ie
\begin{equation}
    \E_{I_t \sim P_t} \left[ \bm{G}_{I_t} \right] = \nabla_{\vtheta_t} \frac{1}{K}\sum_{i=1}^K \mathcal{L}(f(\vx_i;\vtheta), \vy_i).\tag{4}\label{equ:unbiasedness}
\end{equation}
Without loss of generality, we use stochastic gradient descent (SGD)
to update model parameters:
\begin{equation}
    \vtheta_{t+1} = \vtheta_t - \eta_t w_{I_t} \nabla_{\vtheta_t} \mathcal{L}(f(\vx_{I_t};\vtheta), \vy_{I_t}) = \vtheta_t - \eta_t \bm{G}_{I_t},\tag{5}\label{equ:sgd}
\end{equation}
where $\eta_t$ is the learning rate at iteration $t$.
As the goal is to find the optimal $\vtheta^*$, 
we define the expected progress towards the optimum 
at each iteration as follows.
\begin{defi}
At iteration $t$, the expected parameter rectification $R_t$ is defined as the expected reduction of the squared distance between the parameter $\vtheta$ and the optimum $\vtheta^*$ after iteration $t$,
\begin{equation}
    R_t := -\E_{I_t \sim P_t} \left[ || \vtheta_{t+1} - \vtheta^* ||_2^2 - || \vtheta_{t} - \vtheta^* ||_2^2 \right]. \tag{6}\label{equ:rt}
\end{equation}
\end{defi}
Generally, tens of thousands of iterations are included in the training so that
the empirical average parameter rectification will converge to the average of $R_t$ asymptotically.
Therefore, by maximizing $R_t$, we guarantee the most progressive step 
towards parameters optimum at each iteration in the expectation sense.
Inspired by the \textit{greedy algorithm} \cite{edmonds1971matroids},
we aim to maximize $R_t$ at each iteration.

It can be shown that maximizing $R_t$ is equivalent to minimizing $tr(\Var \left[ \bm{G}_{I_t} \right])$ (\thmref{thm:rt}).
Under this umbrella, we show that the optimal sampling probability is
proportional to the per-sample gradient norm
(a special case of \thmref{thm:optimal-probability}).
Therefore, the optimal sampling probability of each datum happens
to be proportional to its \textit{informativeness}.
This property justifies our definition of \textit{informativeness} as the resulting gradient norm of each training example.

However, as the neural network has multiple layers with a large 
number of parameters, it is computationally prohibitive to
calculate the full gradient norm.
Instead, we prove that the matching distance in the feature space
is a good approximation to the
\textit{informativeness}\footnote{The approximation is up to a constant factor, 
which is insignificant as it will be offset by the learning
rate. The same reasoning applies to the approximation of gradients in \textbf{Maximum Loss Minimization} paragraph.}
in \secref{sec:approximation}.
Concretely, for each class consisting of $k$ patches $\{\mX_i : i=1, \dots, k\}$,
we first select a patch $\mX_{i_0}$ randomly,
which serves as the anchor patch,
and then sample a matching patch $\mX_{i}$ with probability
\begin{equation}
    p_{i} \propto d(\vx_{i}, \vx_{i_0}), ~~\textit{for}~i \not= i_0, \tag{7}\label{equ:approximation}
\end{equation}
where $\vx_i$ is the extracted descriptor of $\mX_i$, and $d(\cdot, \cdot)$ measures the discrepancy of the extracted descriptors. See specific choices of $d$ in \secref{sec:angular}.

\begin{algorithm*}[!ht]
\caption{Pipeline of \textit{AdaSample} framework.} 
\label{alg:adasample} 
\begin{algorithmic}[1] 
\REQUIRE ~~\\ 
Dataset of $N$ classes with each containing $k$ matching patches;\\
Moving average of history loss $\mathcal{L}_{avg}$;\\
Hyperparameter $\lambda$;
\STATE Randomly select $n$ distinct classes from the dataset without replacement; 
\label{step:sample-id}
\STATE Extract descriptors of the patches belonging to the selected classes;
\label{step:extract}
\FOR{each selected class with $k$ patches $\{\mX_i : i=1, \dots, k\}$}
\STATE Sample an anchor patch $\mX_{i_0}$ randomly; 
\label{step:sample-anchor}
\STATE Sample a matching patch $\mX_{i}$ from the remaining patches with probabilities specified by \equref{equ:adasample};
\label{step:sample-with-prob}
\ENDFOR
\STATE With sampled positive pairs and their descriptors $\{(\Tilde{\vx}_i,\Tilde{\vx}_i^+)\}_{i=1}^n$, compute Angular Triplet Hinge loss by \equref{equ:angular}; 
\label{step:angular}
\STATE Backpropagate and update model parameters via $\sum_{i=1}^{n} w_i \nabla_{\vtheta} \mathcal{L}_i$; 
\label{step:backprop}
\end{algorithmic}
\end{algorithm*}

\paragraph{Maximum Loss Minimization.}
Minimizing the average loss may be sub-optimal because the training tends to be overwhelmed 
by well-classified examples that provide noisy gradients \cite{lin2017focal}.
On the contrary, well-classified examples can be adaptively
filtered out by minimizing the maximum 
loss \cite{ShalevShwartz2016MinimizingTM},
which can further improve the generalization ability.
However, directly minimizing the maximum loss may lead to
insufficient usage of training data and sensitivity to outliers,
so we approximate the gradient of maximum loss by 
$\nabla_{\vtheta_t} \frac{1}{K} \sum_{i=1}^K \mathcal{L}_i^{\alpha}$,
in which $\alpha$ is sufficiently large.
As $\bm{G}_{I_t}$ is used to update parameters, consider its expectation
\begin{equation}
    \E_{I_t \sim P_t} \left[ \bm{G}_{I_t} \right] = 
    \E_{I_t \sim P_t} \left[w_{I_t} \nabla_{\vtheta_t} \mathcal{L}_{I_t} \right] = 
    \sum_{i=1}^K p_i w_i \nabla_{\vtheta_t} \mathcal{L}_i. \tag{8}\label{equ:unbiasedness-non-uniform}
\end{equation}
To guarantee $\bm{G}_{I_t}$ is an unbiased 
estimator\footnote{
We impose the unbiasedness constraints due to its theoretical
convergence guarantees.
For example, the non-asymptotic error bound induced by unbiased gradient estimates
is referred to \cite{moulines2011non}.
For re-weighted SGD, as in our case, improved convergence rate can be found in \cite{needell2014stochastic}.
}
of $\nabla_{\vtheta_t} \frac{1}{K} \sum_{i=1}^K \mathcal{L}_i^{\alpha}$, 
it suffices to set
\begin{equation}
    p_i w_i = \frac{\alpha}{K} \mathcal{L}_i^{\alpha - 1},
    \tag{9}\label{equ:joint-control}
\end{equation}
as in this case,
\begin{equation}
    \E_{I_t \sim P_t} \left[ \bm{G}_{I_t} \right] = 
    \sum_{i=1}^K \frac{\alpha}{K} \mathcal{L}_i^{\alpha - 1} \nabla_{\vtheta_t} \mathcal{L}_i =
    \sum_{i=1}^K \frac{1}{K} \nabla_{\vtheta_t} \mathcal{L}_i^\alpha. \tag{10}\label{equ:unbiasedness-non-uniform-again}
\end{equation}
Following the previous reasoning, we need to minimize $tr(\Var \left[ \bm{G}_{I_t} \right])$
under the constraints specified by \equref{equ:joint-control}
in order to step most progressively at each iteration.
In \thmref{thm:optimal-probability}, we show that the optimal sampling probability and re-weighting scalar should be given by
\begin{equation}
    p_i \propto \mathcal{L}_i^{\alpha - 1} || \nabla_{\vtheta_t} \mathcal{L}_i ||_2~~\textit{and}~~w_i \propto || \nabla_{\vtheta_t} \mathcal{L}_i ||_2^{-1}.\tag{11} \label{equ:optimal-probability}
\end{equation}
As previously claimed, we approximate the gradient norm
via the matching distance in the feature space.
Besides, in our case, the hinge triplet loss (\equref{equ:triplet-loss}) is
positively (or even linearly) correlated
with the matching distance squared.
Therefore, we use the matching distance squared as an approximation 
of the hinge triplet loss.
Thus, the sampling probability and re-weighting scalar are given by
\begin{equation}
    p_i \propto d(\vx_i, \vx_{i_0})^{2 \alpha - 1}~~\textit{and}~~w_i \propto d(\vx_i, \vx_{i_0})^{-1}, ~~\textit{for}~i \not= i_0. \tag{12}\label{equ:optimal-probability-special}
\end{equation}

Moreover, for better approximation, it is preferable to adjust 
$\alpha$ adaptively, namely, to increase $\alpha$ with training.
Intuitively, when easy matching pairs have been 
correctly classified, we focus more on hard ones.
A good indicator of the training progress is the \textit{average loss}.
As a result, instead of pre-defining a sufficiently large $\alpha$, 
we set $2\alpha - 1 = \lambda / \mathcal{L}_{avg}$, 
where $\lambda$ is a tunable hyperparameter, and $\mathcal{L}_{avg}$
is the moving average of history loss. 
Formally, we formulate our sampling probability and re-weighting scalar as
\begin{equation}
    p_i \propto d(\vx_i, \vx_{i_0})^{\frac{\lambda}{\mathcal{L}_{avg}}}~~\textit{and}~~w_i \propto d(\vx_i, \vx_{i_0})^{-1}, ~~\textit{for}~i \not= i_0. \tag{13}\label{equ:adasample}
\end{equation}
The exponent increases adaptively as training progresses so that hardness-aware training examples can be generated and fed 
to the network.
Our sampling approach is thus named as \textit{AdaSample}.

\subsection{Triplet generation by hardest-in-batch}\label{sec:hardest-in-batch}
\textit{AdaSample} focuses on the batch construction stage,
and for a complete triplet mining framework,
we need to mine negatives from the mini-batch as well.
Here, we adopt the hardest-in-batch strategy in~\cite{mishchuk2017working}.
Formally, given a mini-batch of $n$ matching pairs
$\{ (\widetilde{\mX}_i, \widetilde{\mX}_i^+) : i = 1, \dots, n \}$,
let $(\widetilde{\vx}_i, \widetilde{\vx}_i^+)$ be
the descriptors extracted from
$(\widetilde{\mX}_i, \widetilde{\mX}_i^+)$\footnote{For clarity, $(\widetilde{\mX}_{\diamond}, \widetilde{\mX}_{\diamond}^+)$ denotes the selected matching pairs,
with different pairs belonging to different classes.
$\mX_{\diamond}$ denotes a generic patch in a specific class, 
where $\diamond$ denotes the placeholder for the index.}.
For each matching pair $(\widetilde{\mX}_i, \widetilde{\mX}_i^+)$, 
we select the non-matching patch which lies closest 
to one of the matching patches in the feature space.
Then, the Hinge Triplet (HT) loss is defined as follows:
\begin{align*}
    \mathcal{L}_i &= \max{\{t+(d_i^{pos})^2-(d_i^{neg})^2,~0\}},\\
    d_i^{pos} &= d(\widetilde{\vx}_i, \widetilde{\vx}_i^+),\\
    d_i^{neg} &= \min_{j \not= i} { \left\{ \min \left\{ d(\widetilde{\vx}_i, \widetilde{\vx}_j),~ d(\widetilde{\vx}_i^+, \widetilde{\vx}_j^+) \right\} \right\} },\tag{14}\label{equ:triplet-loss}\\
\end{align*}
where $t$ denotes the margin.
Incorporating the re-weighting scalar, we update the model parameter
via the gradient estimator
$\sum_{i=1}^{n} w_i \nabla_{\vtheta} \mathcal{L}_i$.

\subsection{Distance Metric}\label{sec:angular}
Euclidean distance is widely used in previous works
\cite{simo2015discriminative,tian2017l2,mishchuk2017working,sosnet2019cvpr}.
However, as the descriptors lie on the unit hypersphere 
in $128$-dimensional space (\secref{sec:details}),
it is more natural to adopt the geodesic distance of the embedded manifold.
Therefore, we adopt the angular distance~\cite{deng2018arcface} as follows:
\begin{equation}
    d(\widetilde{\vx}_1, \widetilde{\vx}_2) = \arccos(\widetilde{\vx}_1 \bm{\cdot} \widetilde{\vx}_2),\tag{15}\label{equ:angular}
\end{equation}
where $\bm{\cdot}$ denotes the inner product operator.
We nominate our loss function as Angular Hinge Triplet (AHT) loss, 
which is demonstrated to result in consistent performance
improvement (\secref{sec:ablation}).

\algref{alg:adasample} summarizes the overall triplet generation framework.
For each training iteration, 
we first randomly pick $n$ distinct classes from the dataset
and extract descriptors for patches belonging to these classes
(\stepref{step:sample-id}, \ref{step:extract}).
Then, we randomly choose a patch as the anchor from each of 
the selected classes (\stepref{step:sample-anchor})
and adopt our proposed \textit{AdaSample} to select
an informative matching patch (\stepref{step:sample-with-prob}).
With the generated mini-batch, we mine hard negatives following
\cite{mishchuk2017working} and compute Angular Hinge
Triplet (AHT) loss (\stepref{step:angular}). 

\section{Theoretical Analysis}\label{sec:theorey}
In this section, we complete the theoretical analysis of
\textit{informativeness} in \secref{sec:importance},
and prove that the matching distance can serve as a good approximation
of \textit{informativeness} in \secref{sec:approximation}.

\subsection{Informativeness Formulation}\label{sec:importance}
Following notations in \secref{sec:adasample},
we reformulate $R_t$ (\equref{equ:rt}),
and give an equivalent condition for maximizing $R_t$.
The same conclusion can be found in \cite{katharopoulos2018not}.
\begin{thm}\label{thm:rt}
    Let $R_t$, $\vtheta^*$, and $\bm{G}_i$ be defined as in \equref{equ:rt}, \ref{equ:theta-star} and \ref{equ:g}, respectively. Then, we have
    \begin{align*}
        R_t =& 2 \eta_t (\vtheta_t - \vtheta^*)^T \E_{I_t \sim P_t} \left[ \bm{G}_{I_t} \right] \\
        &- \eta_t^2 \E_{I_t \sim P_t} \left[ \bm{G}_{I_t} \right]^T \E_{I_t \sim P_t} \left[ \bm{G}_{I_t} \right] - 
        \eta_t^2 tr(\Var \left[ \bm{G}_{I_t} \right]).\tag{16}\label{equ:thm1}
    \end{align*}
\end{thm}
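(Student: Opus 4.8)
The plan is to prove the identity by a direct algebraic expansion of the squared distance and then invoke the standard second-moment (bias--variance) decomposition. First I would substitute the SGD update \equref{equ:sgd}, $\vtheta_{t+1} = \vtheta_t - \eta_t \bm{G}_{I_t}$, into $\|\vtheta_{t+1} - \vtheta^*\|_2^2$ and expand the inner product:
\[
\|\vtheta_{t+1} - \vtheta^*\|_2^2 = \|\vtheta_t - \vtheta^*\|_2^2 - 2\eta_t (\vtheta_t - \vtheta^*)^T \bm{G}_{I_t} + \eta_t^2\, \bm{G}_{I_t}^T \bm{G}_{I_t}.
\]
Subtracting $\|\vtheta_t - \vtheta^*\|_2^2$, negating, and taking $\E_{I_t \sim P_t}[\cdot]$ — noting that $\vtheta_t$ and $\vtheta^*$ are fixed given the history and hence pull out of the expectation — yields
\[
R_t = 2\eta_t (\vtheta_t - \vtheta^*)^T \E_{I_t \sim P_t}[\bm{G}_{I_t}] - \eta_t^2\, \E_{I_t \sim P_t}\!\left[\bm{G}_{I_t}^T \bm{G}_{I_t}\right].
\]

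Next I would rewrite the second-moment term $\E[\bm{G}_{I_t}^T \bm{G}_{I_t}] = \E[\|\bm{G}_{I_t}\|_2^2]$ via the elementary identity $\E[\|X\|_2^2] = \|\E[X]\|_2^2 + \Tr(\Var[X])$, which follows by applying the scalar variance decomposition $\E[X_j^2] = (\E[X_j])^2 + \Var[X_j]$ coordinate-wise and summing over $j$ (the off-diagonal covariances do not enter because only the trace appears). Substituting $\E[\bm{G}_{I_t}^T \bm{G}_{I_t}] = \E[\bm{G}_{I_t}]^T \E[\bm{G}_{I_t}] + \Tr(\Var[\bm{G}_{I_t}])$ back into the previous display gives exactly \equref{equ:thm1}.

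The argument is entirely elementary, so there is no genuine obstacle; the only points worth stating carefully are that (i) the expectation is taken solely over the fresh sampling index $I_t$, so $\vtheta_t$ (a function of $I_1,\dots,I_{t-1}$) and $\vtheta^*$ act as constants — this is what makes the cross term linear in $\E[\bm{G}_{I_t}]$ — and (ii) the re-weighting scalars $w_i$ are already folded into $\bm{G}_i$ by definition \equref{equ:g}, so the weighted case requires no separate treatment. I would present the three displayed lines above essentially as the complete proof.
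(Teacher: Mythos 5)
Your proof is correct: substituting the SGD update into the squared distance, taking expectation over $I_t$ only (with $\vtheta_t$, $\vtheta^*$ fixed), and applying $\E\left[\|\bm{G}_{I_t}\|_2^2\right] = \|\E\left[\bm{G}_{I_t}\right]\|_2^2 + tr(\Var\left[\bm{G}_{I_t}\right])$ yields exactly \equref{equ:thm1}. This is the same standard argument the paper relies on (it omits the derivation and defers to the cited reference \cite{katharopoulos2018not}), so no further comparison is needed.
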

Due to unbiasedness (\equref{equ:unbiasedness}), 
the first two terms in \equref{equ:thm1} is fixed, 
so maximizing $R_t$ is equivalent to 
minimizing $tr(\Var \left[ \bm{G}_{I_t} \right])$.
\thmref{thm:optimal-probability} specifies the optimal probabilities
to minimize the aforementioned trace under a more general assumption.
\begin{thm}\label{thm:optimal-probability}
    Let $\bm{G}_i$ be defined in \equref{equ:g} and suppose the sampled index $I_t$ obeys distribution $P_t$. 
    Then, given the constraints $p_i w_i = \frac{\alpha}{K} \mathcal{L}_i^{\alpha - 1}$,
    $tr(\Var \left[ \bm{G}_{I_t} \right])$ is minimized by the following optimal sampling probabilities:
    \begin{equation}
        p_i = \frac{1}{Z} \mathcal{L}_i^{\alpha - 1} || \nabla_{\vtheta_t} \mathcal{L}_i ||_2,
        ~\textit{where}~Z = \sum_{j=1}^{K} \mathcal{L}_j^{\alpha - 1} || \nabla_{\vtheta_t} \mathcal{L}_j ||_2.
        \tag{17}\label{equ:optimal-prob-thm}
    \end{equation}
\end{thm}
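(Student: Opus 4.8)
The plan is to expand $\mathrm{tr}(\Var[\bm{G}_{I_t}])$ as a function of the sampling probabilities $p_i$, impose the constraint $p_i w_i = \frac{\alpha}{K}\mathcal{L}_i^{\alpha-1}$ to eliminate the $w_i$, and then minimize the resulting expression over the simplex $\{p_i \ge 0,\ \sum_i p_i = 1\}$ via the Cauchy--Schwarz inequality. First I would write $\mathrm{tr}(\Var[\bm{G}_{I_t}]) = \E_{I_t\sim P_t}[\|\bm{G}_{I_t}\|_2^2] - \|\E_{I_t\sim P_t}[\bm{G}_{I_t}]\|_2^2$. The second term is fixed: by the constraint, $\E[\bm{G}_{I_t}] = \sum_i p_i w_i \nabla_{\vtheta_t}\mathcal{L}_i = \frac{\alpha}{K}\sum_i \mathcal{L}_i^{\alpha-1}\nabla_{\vtheta_t}\mathcal{L}_i$, which does not depend on the individual $p_i$ at all. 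So minimizing the trace reduces to minimizing the first term, $\E[\|\bm{G}_{I_t}\|_2^2] = \sum_i p_i w_i^2 \|\nabla_{\vtheta_t}\mathcal{L}_i\|_2^2 = \sum_i \frac{(p_i w_i)^2}{p_i}\|\nabla_{\vtheta_t}\mathcal{L}_i\|_2^2 = \frac{\alpha^2}{K^2}\sum_i \frac{\mathcal{L}_i^{2(\alpha-1)}\|\nabla_{\vtheta_t}\mathcal{L}_i\|_2^2}{p_i}$.

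The core step is then the minimization of $\sum_i a_i^2 / p_i$ subject to $\sum_i p_i = 1$, where $a_i := \mathcal{L}_i^{\alpha-1}\|\nabla_{\vtheta_t}\mathcal{L}_i\|_2 \ge 0$. By Cauchy--Schwarz, $\bigl(\sum_i a_i\bigr)^2 = \bigl(\sum_i \frac{a_i}{\sqrt{p_i}}\cdot\sqrt{p_i}\bigr)^2 \le \bigl(\sum_i \frac{a_i^2}{p_i}\bigr)\bigl(\sum_i p_i\bigr) = \sum_i \frac{a_i^2}{p_i}$, with equality iff $\frac{a_i}{\sqrt{p_i}} \propto \sqrt{p_i}$, i.e. $p_i \propto a_i$. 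Normalizing gives exactly $p_i = \frac{1}{Z}\mathcal{L}_i^{\alpha-1}\|\nabla_{\vtheta_t}\mathcal{L}_i\|_2$ with $Z = \sum_j \mathcal{L}_j^{\alpha-1}\|\nabla_{\vtheta_t}\mathcal{L}_j\|_2$, as claimed. (Equivalently one could run a Lagrange-multiplier argument: $\partial_{p_i}\bigl[\sum_j a_j^2/p_j + \mu(\sum_j p_j - 1)\bigr] = 0$ yields $-a_i^2/p_i^2 + \mu = 0$, hence $p_i \propto a_i$; the objective is convex on the open simplex so this critical point is the global minimum.)

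I would close by noting the two special cases the paper invokes: taking $\alpha = 1$ makes $p_i \propto \|\nabla_{\vtheta_t}\mathcal{L}_i\|_2$, recovering the claim that the optimal probability is proportional to the \emph{informativeness} as in Definition on \equref{equ:informativeness}; and reading off $w_i = \frac{\alpha}{K p_i}\mathcal{L}_i^{\alpha-1} = \frac{\alpha}{KZ}\|\nabla_{\vtheta_t}\mathcal{L}_i\|_2^{-1} \propto \|\nabla_{\vtheta_t}\mathcal{L}_i\|_2^{-1}$ gives the re-weighting scalar in \equref{equ:optimal-probability}.

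The main obstacle is not the algebra but the bookkeeping around degenerate cases and the equality condition: one should assume (or note) that all $\mathcal{L}_i > 0$ and all $\|\nabla_{\vtheta_t}\mathcal{L}_i\|_2 > 0$ so that every $a_i > 0$ and the Cauchy--Schwarz equality case $p_i \propto a_i$ lies in the interior of the simplex (otherwise $a_i = 0$ forces $p_i = 0$ and the term $a_i^2/p_i$ must be interpreted as $0$, which is harmless but needs a word). A secondary subtlety is that $\Var[\bm{G}_{I_t}]$ is a matrix; I would make explicit at the outset that $\mathrm{tr}(\Var[\bm{G}_{I_t}]) = \E\|\bm{G}_{I_t} - \E\bm{G}_{I_t}\|_2^2 = \E\|\bm{G}_{I_t}\|_2^2 - \|\E\bm{G}_{I_t}\|_2^2$, so that the whole problem becomes scalar and the trace of the covariance never needs to be manipulated entrywise.
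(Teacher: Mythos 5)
Your proposal is correct and follows essentially the same route as the paper's proof: expand $tr(\Var[\bm{G}_{I_t}])$ as $\E[\|\bm{G}_{I_t}\|_2^2]-\|\E[\bm{G}_{I_t}]\|_2^2$, note the mean is fixed by the constraint, substitute $p_i w_i^2 = (p_i w_i)^2/p_i$, and minimize $\sum_i a_i^2/p_i$ over the simplex. The only difference is that you make explicit (via Cauchy--Schwarz, plus the equality condition and degenerate-case remarks) the final minimization step that the paper simply asserts, which is a welcome but minor addition.
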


\begin{proof}
    As $\bm{G}_{I_t}$ is an \textit{unbiased} estimator of the actual gradient (\equref{equ:unbiasedness}), $\E_{I_t \sim P_t} \left[ \bm{G}_{I_t} \right]$ is fixed in our case, denoted by $\vmu$ for short.
    By the linearity of trace and $tr(\vmu \vmu^T) = || \vmu ||_2^2$,
    we have
    \begin{align*}
        tr(\Var \left[ \bm{G}_{I_t} \right])
        &= tr(\E_{I_t \sim P_t} \left[ (\bm{G}_{I_t} - \vmu) (\bm{G}_{I_t} - \vmu)^T \right])\\
        &= tr(\E_{I_t \sim P_t} \left[ \bm{G}_{I_t} \bm{G}_{I_t}^T - \vmu \vmu^T \right])\\
        &= \E_{I_t \sim P_t} \left[ tr(\bm{G}_{I_t} \bm{G}_{I_t}^T) \right] - tr(\vmu \vmu^T)\\
        &= \E_{I_t \sim P_t} \left[ \left\| \bm{G}_{I_t} \right\|_2^2 \right] - || \vmu ||_2^2\\
        &= \sum_{i=1}^{K} p_i w_i^2 \left\| \nabla_{\vtheta_t} \mathcal{L}_i \right\|_2^2 - || \vmu ||_2^2\\
        &= \frac{\alpha^2}{K^2} \sum_{i=1}^{K} \frac{\mathcal{L}_i^{2 \alpha - 2} \left\| \nabla_{\vtheta_t} \mathcal{L}_i \right\|_2^2}{p_i} - || \vmu ||_2^2.
        \tag{18} 
    \end{align*}
    Mathematically, given the constraints $\sum_{i=1}^{K} p_i = 1$,
    the aforementioned harmonic mean of $\{ p_1, \dots, p_K \}$ reaches its minimum when the probabilities satisfy
    \begin{equation}
        p_i \propto \mathcal{L}_i^{\alpha - 1} \left\| \nabla_{\vtheta_t} \mathcal{L}_i \right\|_2.
        \tag{19}
    \end{equation}
    Dividing by a normalization factor, we get the expression in \equref{equ:optimal-prob-thm}.
\end{proof}

Note that in the special case of $\alpha = 1$,
the constraints degrade into $p_i w_i = \frac{1}{K}$,
and the optimal sampling probabilities become $p_i \propto || \nabla_{\vtheta_t} \mathcal{L}_i ||_2$.

\subsection{Approximation of Informativeness}\label{sec:approximation}
As mentioned in \secref{sec:adasample}, the matching distance 
can serve as a good approximation of \textit{informativeness}.
We justify this here.
For simplicity, we introduce some notations for a $L$-layer multi-layer perceptron (MLP).
Let $\vtheta^{(l)} \in \R^{M_l \times M_{l-1}}$ be the weight matrix
for layer $l$ and $g^{(l)}$ be a Lipschitz continuous activation function.
Then the multi-layer perceptron can be formulated as follows:
\begin{align*}
    \vx^{(0)} &= \vx,\\
    \vh^{(l)} &= \vtheta^{(l)} \vx^{(l-1)},~~\textit{for}~l = 1, \cdots, L,\\
    \vx^{(l)} &= g^{(l)}(\vh^{(l)}),~~\textit{for}~l = 1, \cdots, L,\\
    f(\vx;\vtheta) &= \vx^{(L)},\\
    \vtheta &= \{\vtheta^{(1)}, \dots, \vtheta^{(L)}\}. \tag{20} \label{equ:mlp} \\
\end{align*}

Note that although our notations describe only MLPs without bias, 
our analysis holds for any affine transformation
followed by a Lipschitz continuous non-linearity.
Therefore, our reasoning can naturally extend to CNNs.
With
\begin{align*}
    \bm{\Gamma}_l(\vh^{(l)}) &= \text{diag}\left\{ g'^{(l)}(\evh_1^{(l)}), \dots, g'^{(l)}(\evh_{M_l}^{(l)}) \right\},\\
    \bm{\Pi}^{(l)} &= \bm{\Gamma}_l(\vh^{(l)}) \vtheta_{l+1}^T \cdots \bm{\Gamma}_{L-1}(\vh^{(L-1)}) \vtheta_{L}^T \bm{\Gamma}_{L}(h^{(L)}), \tag{21} \label{equ:mlp-grad}
\end{align*}
we have
\begin{align*}
    || \nabla_{\vtheta_l} \mathcal{L}(f(\vx;\vtheta), \vy) ||_2 
    &= \left\| \left( \bm{\Pi}^{(l)} \nabla_{\vx^{(L)}} \mathcal{L} \right) \left( \vx^{(l-1)} \right)^T \right\|_2 \\
    &\leq || \bm{\Pi}^{(l)} ||_2 || \vx^{(l-1)} ||_2 || \nabla_{\vx^{(L)}} \mathcal{L} ||_2. \tag{22} \label{equ:mlp-gradnorm}
\end{align*}
Various data preprocessing,
weight initialization \cite{glorot2010understanding,he2015delving},
and activation normalization \cite{ioffe2015batch,lei2016layer,ulyanov2016instance}
techniques uniformize the activations of each layer across samples.
Therefore, the variation of gradient norms is mostly captured
by the gradient of the loss function \textit{w.r.t.} the output
of neural networks,
\begin{equation}
    {\rm info}(\vx, \vy) =
    || \nabla_{\vtheta} \mathcal{L}(f(\vx; \vtheta), \vy) ||_2 
    \approx M \left\| \nabla_{\vx^{(L)}} \mathcal{L} \right\|_2, \tag{23} \label{equ:info-approximation}
\end{equation}
where $M$ is a constant,
and $ M \left\| \nabla_{\vx^{(L)}} \mathcal{L} \right\|_2$ serves 
as a precise approximation of the full gradient norm. 
For simplicity, we consider hinge triplet loss
(\equref{equ:triplet-loss}) here.
Then, the gradient norm \textit{w.r.t.} the descriptor
of the matching patch is just twice the matching 
distance\footnote{This relation holds only when the hinge triplet loss is positive. Empirically, due to the relatively large margin, the hinge loss never becomes zero.},
\begin{equation}
    \left\| \nabla_{\vx^{(L)}} \mathcal{L} \right\|_2 = 2d^{pos}. \tag{24}
\end{equation}
As a result, we reach the conclusion that
the matching distance is a good approximation
to the \textit{informativeness}.
Also, we empirically verify this in \secref{sec:ablation}.

\section{Experiments}
\begin{table*}[t]
    \centering
    \scalebox{0.95}{ 
    \begin{tabular}{lcccccccccc}
         \toprule[2pt]
         \multirow{2}{*}{Descriptor} & \multirow{2}{*}{Length}
         & {\small Train$\rightarrow$} & Notredame & Yosemite & Liberty
         & Yosemite & Liberty & Notredame & \multirow{2}{*}{Mean} \\
         \cline{4-5} \cline{6-7} \cline{8-9}
         & & {\small Test$\rightarrow$} &\multicolumn{2}{c}{Liberty}
         & \multicolumn{2}{c}{Notredame} & \multicolumn{2}{c}{Yosemite} \\\hline
         SIFT \cite{lowe2004distinctive} & 128 &
         & \multicolumn{2}{c}{29.84} & \multicolumn{2}{c}{22.53}
         & \multicolumn{2}{c}{27.29} & 26.55\\
         DeepDesc \cite{7410379} & 128 &
         & \multicolumn{2}{c}{10.9} & \multicolumn{2}{c}{4.40}
         & \multicolumn{2}{c}{5.69} & 6.99\\
         GeoDesc \cite{luo2018geodesc} & 128 &
         & \multicolumn{2}{c}{5.47} & \multicolumn{2}{c}{1.94}
         & \multicolumn{2}{c}{4.72} & 4.05\\
         MatchNet \cite{han2015matchnet} & 4096 &
         & 7.04 & 11.47 & 3.82 & 5.65 & 11.60 & 8.70 & 8.05 \\
         $\rm L2$-Net \cite{tian2017l2} & 128 &
         & 3.64 & 5.29 & 1.15 & 1.62 & 4.43 & 3.30 & 3.24 \\
         CS-$\rm L2$-Net \cite{tian2017l2} & 256 &
         & 2.55 & 4.24 & 0.87 & 1.39 & 3.81 & 2.84 & 2.61 \\
         HardNet \cite{mishchuk2017working} & 128 &
         & \textbf{1.47} & 2.67 & 0.62 & \textbf{0.88} & 2.14 & 1.65 & 1.57 \\
         HardNet-GOR \cite{zhang2017learning} & 128 &
         & 1.72 & 2.89 & 0.63 & 0.91 & 2.10 & 1.59 & 1.64 \\\cdashline{1-10}[4pt/6pt]
         HardNet* & 128 &
         & 1.80 & 2.89 & 0.68 & 0.90 & 1.93 & 1.71 & 1.65 \\
         AdaSample* (\textbf{Ours}) & 128 &
         & 1.64 & \textbf{2.62} & \textbf{0.61} & \textbf{0.88} & \textbf{1.92} & \textbf{1.46} & \textbf{1.52}\\\hline
         TFeat-M+ \cite{balntas2016learning} & 128 &
         & 7.39 & 10.31 & 3.06 & 3.80 & 8.06 & 7.24 & 6.64 \\
         $\rm L2$-Net+ \cite{tian2017l2} & 128 &
         & 2.36 & 4.70 & 0.72 & 1.29 & 2.57 & 1.71 & 2.23 \\
         CS-$\rm L2$-Net+ \cite{tian2017l2} & 256 &
         & 1.71 & 3.87 & 0.56 & 1.09 & 2.07 & 1.30 & 1.76 \\
         HardNet+ \cite{mishchuk2017working} & 128 &
         & 1.49 & 2.51 & 0.53 & 0.78 & 1.96 & 1.84 & 1.51 \\
         HardNet-GOR+ \cite{zhang2017learning} & 128 &
         & 1.48 & 2.43 & 0.51 & 0.78 & 1.76 & 1.53 & 1.41 \\
         DOAP+ \cite{he2018local} & 128 &
         & 1.54 & 2.62 & 0.43 & 0.87 & 2.00 & 1.21 & 1.45 \\\cdashline{1-10}[4pt/6pt]
         HardNet+* & 128 &
         & 1.32 & 2.31 & 0.41 & 0.67 & 1.51 & 1.24 & 1.24 \\
         AdaSample+* (\textbf{Ours}) & 128 &
         & \textbf{1.25} & \textbf{2.21} & \textbf{0.40} 
         & \textbf{0.63} & \textbf{1.40} & \textbf{1.14} & \textbf{1.17} \\
         \toprule[2pt]
    \end{tabular}}
    \caption{Patch classfication results on UBC Phototour dataset~\cite{brown2011discriminative}. The false positive rate at $95\%$ recall is reported. $+$ indicates data augmentation and $*$ indicates positive generation.}
    \label{tab:brown}
\end{table*}

\begin{figure*}[!tb]
  \centering
  \begin{overpic}[width=1\linewidth]{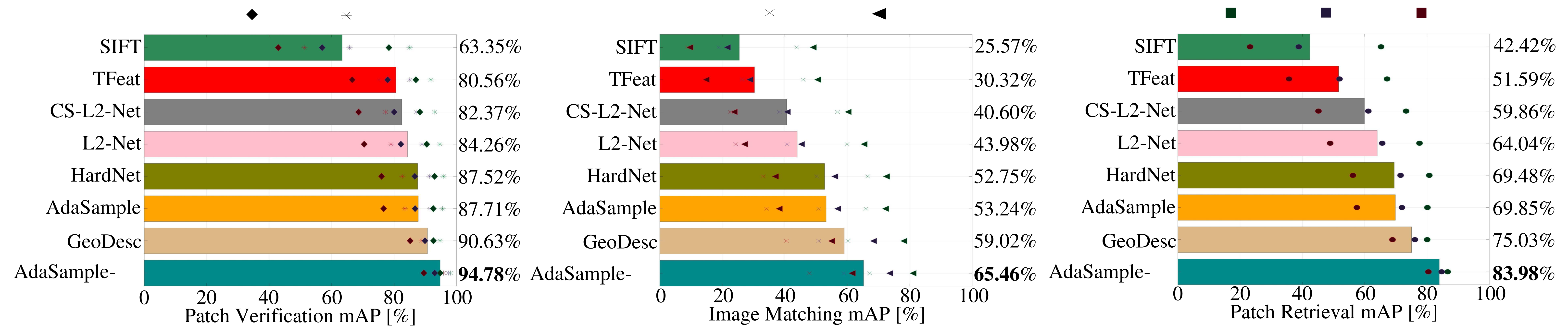}
    \put(16.8,20.1){\footnotesize{\textsc{Inter}}}
    \put(22.7,20.1){\footnotesize{\textsc{Intra}}}
    \put(57,20.1){\footnotesize{\textsc{Viewp}}}
    \put(50,20.1){\footnotesize{\textsc{Illum}}}
    \put(79.3,20.1){\footnotesize{\textsc{Easy}}}
    \put(85.4,20.1){\footnotesize{\textsc{Hard}}}
    \put(91.5,20.1){\footnotesize{\textsc{Tough}}}
    \put(7.45,3.65){{\scriptsize \texttt{HP}}}
    \put(40.36,3.55){{\scriptsize \texttt{HP}}}
    \put(73.35,3.65){{\scriptsize \texttt{HP}}}
  \end{overpic}
  \vspace{-5pt}
  \caption{Evaluation results on HPatches dataset \cite{balntas2017hpatches}.
    By default, descriptors are trained on \textit{Liberty} subset of UBC Phototour~\cite{brown2011discriminative} dataset,
    and ``\texttt{-HP}'' indicates descriptors trained on
    HPatches training set of split a.
    Marker color indicates the level of geometrical noises and
    marker type indicates the experimental setup. 
    \textsc{Inter} and \textsc{Intra} indicate the source of negative examples for the \textit{verification} task.
    \textsc{Viewp} and \textsc{Illum} indicate the sequence type for the \textit{matching} task.
  }\label{fig:hpatches}
  \vspace{-5pt}
\end{figure*}

\subsection{Implementation Details} \label{sec:details}
We adopt the architecture of $\rm L2$-Net \cite{tian2017l2} to embed local 
descriptors into the unit hypersphere in $128$-dimensional space.
Following prior works \cite{tian2017l2,mishchuk2017working},
all patches are resized to $32 \times 32$ and
normalized to zero per-patch mean and unit per-patch variance.
We train our model from scratch in PyTorch library \cite{paszke2017automatic}
using SGD optimizer with initial learning rate $\eta=10$,
momentum $0.5$, and weight decay $0.0001$.
Batch size is $1024$, margin $t = 1$, and $\lambda=10$
unless otherwise specified. 
We generate $1,000,000$ matching pairs for each epoch, and
the total number of epochs is $90$.
The learning rate is divided by $10$ at the end
of $30$, $60$, $80$ epochs.

We compare our method with both handcrafted and deep methods\footnote{
Note that the training dataset of GeoDesc \cite{luo2018geodesc} is not released,
so the comparison may be unfair.
Besides, some recent works \cite{sosnet2019cvpr,zhang2019learning} explore
in different directions, and their training codes are not publicly available.
So we leave the efficacy comparison and system combination in future work.},
including SIFT \cite{lowe2004distinctive}, DeepDesc~\cite{7410379},
TFeat~\cite{balntas2016learning},
$\rm L2$-Net~\cite{tian2017l2}, HardNet~\cite{mishchuk2017working},
HardNet with global orthogonal regularization (GOR) \cite{zhang2017learning}, 
DOAP~\cite{he2018local}, and GeoDesc~\cite{luo2018geodesc}.
Comprehensive evaluation results and ablation studies
on two standard descriptor datasets:
UBC Phototour \cite{brown2011discriminative} (\secref{sec:brown}),
and HPatches \cite{balntas2017hpatches} (\secref{sec:hpatches})
demonstrate the efficacy of our proposed sampling framework.

\subsection{UBC Phototour} \label{sec:brown}

UBC Phototour \cite{brown2011discriminative}, also known as Brown dataset, 
consists of three subsets: 
\textit{Liberty}, \textit{Notre Dame}, and \textit{Yosemite},
with about $\rm 400K$ normalized $64 \times 64$ patches in each subset. 
Keypoints are detected by DoG detector~\cite{lowe2004distinctive} and verified by $\rm 3D$ model.
The testing set consists of $\rm 100K$ matching and non-matching pairs for each sequence.
For evaluation, models are trained on one subset and tested on the other two.
The metric is the false positive rate (FPR) at $95\%$ true positive recall. 
The evaluation results are reported in \tabref{tab:brown}.

Our method outperforms other approaches by a significant margin.
We randomly flip and rotate by $90$ degrees for data augmentation, noted by $+$.
Besides, for our method, we also generate positive patches by random rotation
such that each class has $15$ patches, noted by *. 
We augment matching pairs as there are too few patches (two or three) 
corresponding to one class in UBC Phototour
dataset~\cite{brown2011discriminative},
which limits the capacity of our method.
To analyze its effect, we also conduct it for
HardNet~\cite{mishchuk2017working} baseline.
It can be seen that our method consistently outperforms the baseline,
indicating the effectiveness of our adaptive sampling solution.

\subsection{HPatches} \label{sec:hpatches}
HPatches \cite{balntas2017hpatches} consists of $116$ sequences of $6$ images.
The dataset is split into two parts: \textit{viewpoint} - $59$ sequences with 
significant viewpoint change and \textit{illumination} - $57$ sequences with 
significant illumination change.
According to the level of geometric noises,
the patches can be further divided into three groups: 
\textit{easy}, \textit{hard}, and \textit{tough}.
There are three evaluation tasks: \textit{patch verification},
\textit{image matching}, and \textit{patch retrieval}.
Following standard evaluation protocols of the dataset,
we show results in \figref{fig:hpatches}.
It demonstrates that our method performs in favor of other methods on
patch verification task, which is consistent with the patch classification
results in \tabref{tab:brown}.
Furthermore, our descriptors achieve the best results on the more challenging
image matching and patch retrieval tasks,
indicating the improved generalization ability contributed by our approach.

\subsection{Ablation Study}\label{sec:ablation}
\paragraph{Informativeness Approximation.}
We empirically verify the conclusion in \secref{sec:approximation} that the probability induced by matching distance approximate well to the one induced by \textit{informativeness} (\figref{fig:empirical}, Left).
Besides, the results show that the Pearson correlation is consistently greater than $0.8$ during training (\figref{fig:empirical}, Right),
which indicates these probabilities have strong correlation with each other statistically.

\begin{figure}[!hbt]
    \centering
    \includegraphics[width=0.47\textwidth]{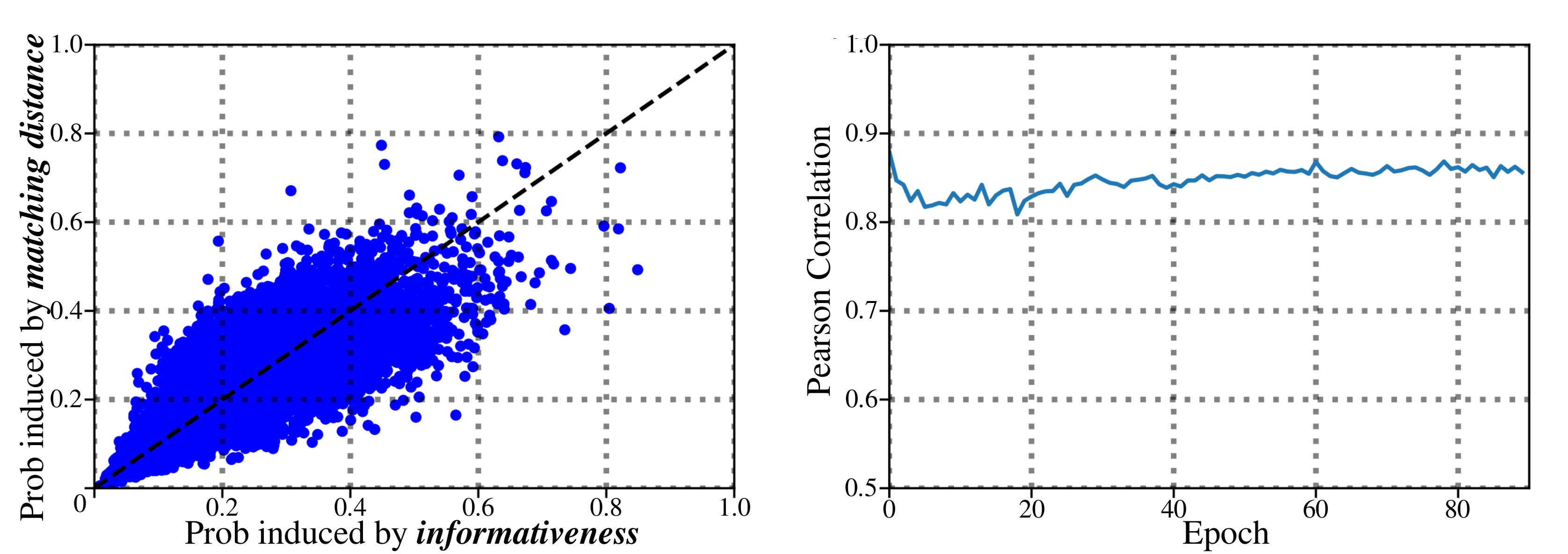}
    \caption{(Left) Probabilities induced by informativeness and matching distance.
    	(Right) Pearson correlation between probabilities and training epochs.}
    \label{fig:empirical}
    \vspace{-4mm}
\end{figure}

\paragraph{Impact of $\lambda$ and Distance Metric.}
We experiment with varying $\lambda$ in \textit{AdaSample} to control the overall hardness of the selected matching pairs.
A large $\lambda$ indicates that hard matching pairs are more likely to be selected.
When $\lambda = 0$, our method degrades into random sampling and the
overall framework becomes HardNet \cite{mishchuk2017working}, 
and as $\lambda \rightarrow +\infty$, the framework becomes \textit{Hardpos}.
Therefore, both HardNet and \textit{Hardpos} are special cases of our proposed \textit{AdaSample}.
\tabref{tab:ablation} shows the results on
HPatches~\cite{balntas2017hpatches} dataset,
where $\lambda=10$ leads to the best results in most cases.
It demonstrates the advantages of our balanced sampling strategy against the hardest solution.
Also, \tabref{tab:ablation} demonstrates that the angular hinge triplet (AHT) loss outperforms the commonly-used hinge triplet (HT) loss in most cases.

\begin{table}[!ht]
    \centering
    \scalebox{0.9}{
    \begin{tabular}{lcccccc}
        \toprule[2pt]
        Task & \multicolumn{2}{c}{Verification} & \multicolumn{2}{c}{Matching} & \multicolumn{2}{c}{Retrieval}\\
        \hline
        Loss & AHT & HT & AHT & HT & AHT & HT\\
        \hline
        $\lambda = 1$ & 93.84 & 93.17 & 64.09 & 62.64 & 81.26 & 79.97\\
        $\lambda = 2$ & 94.72 & 94.56 & \textbf{66.04} & 65.92 & 83.58 & 83.34\\
        $\lambda = 5$ & \textbf{94.78} & 94.76 & 65.89 & 65.68 & 83.80 & 83.54\\
        $\lambda = 10$ & \textbf{94.78} & 94.60 & 65.46 & 65.37 & \textbf{83.98} & 83.62\\
        $\lambda = 20$ & 94.60 & 94.69 & 64.56 & 64.84 & 83.56 & 83.69\\
        $\lambda \rightarrow +\infty$ & 94.42 & 94.51 & 63.81 & 64.02 & 83.41 & 83.29\\
        \toprule[2pt]
    \end{tabular}}
    \caption{Ablation studies on the impact of $\lambda$. All experiments are conducted on HPatches \cite{balntas2017hpatches} benchmark.}
    \label{tab:ablation}
    \vspace{-4mm}
\end{table}

\paragraph{Stability and Reproducibility.}
The sampling naturally comes from stochasticity. 
%
To ensure reproducibility, we conduct experiments on five runs with different
random seeds and show the means and standard deviations of the
patch classification results in \tabref{tab:reproducibility}.
It demonstrates the stability of our sampling solution.
We argue that a possible explanation of the stability is the
\textit{unbiasedness} of the gradient estimator
(\equref{equ:unbiasedness-non-uniform-again}).
As the number of training triplets is huge, the estimated gradients
converge to the actual gradient asymptotically.
Therefore, the gradients can guide the network towards
the parameter optimum as training progresses,
regardless of the specific random condition.

\begin{table}[!ht]
    \centering
    \scalebox{0.86}{
    \begin{tabular}{llcccc}
         \toprule[2pt]
         Train & Test & HardNet+* & AdaSample+* & Rel $\uparrow$ & \textit{p} value \\\hline
         Notr & \multirow{2}{*}{Lib} & 1.316$\pm$0.044 & 1.254$\pm$0.026 & 4.71\% & 0.031 \\
         Yos &  & 2.310$\pm$0.063 & 2.212$\pm$0.049 & 4.28\% & 0.018 \\\hline
         Lib & \multirow{2}{*}{Notr} & 0.406$\pm$0.011 & 0.400$\pm$0.016 & 1.58\% & 0.337 \\
         Yos & & 0.671$\pm$0.010 & 0.627$\pm$0.012 & 6.62\% & 0.006 \\\hline
         Lib & \multirow{2}{*}{Yos} & 1.513$\pm$0.084 & 1.395$\pm$0.050 & 7.80\% & 0.030 \\
         Notr & & 1.241$\pm$0.044 & 1.137$\pm$0.036 & 8.38\% & 0.011 \\
         \toprule[2pt]
    \end{tabular}}
    \caption{Reproducibility and statistical significance of our proposed \textit{AdaSample}.
    The repeated experiments are conducted on UBC Phototour \cite{brown2011discriminative} dataset.
    Here, ``Rel $\uparrow$'' indicates the relative improvement upon the HardNet \cite{mishchuk2017working} baseline.
    }\label{tab:reproducibility}
    \vspace{-3mm}
\end{table}

\paragraph{Statistical Significance.}
Since previous methods have been approaching the saturating point
in terms of the performance on UBC Phototour~\cite{brown2011discriminative} dataset,
it is challenging to make progress on top of the
HardNet~\cite{mishchuk2017working} baseline.
%
However, with the proposed method, we still observe a consistent improvement,
as demonstrated in \tabref{tab:reproducibility}.
It can be seen that our method can give a relative improvement of
up to 8.38\% in terms of patch classification accuracy,
indicating our superiority.
To be more principled, we also demonstrate the statistical significance
of our improvement upon the baseline.
Specifically, we adopt the non-parametric hypothesis testing, \ie
the classic Mann-Whitney testing~\cite{mann1947test},
to test whether a random variable is stochastically
larger than the other one.
In our setting, the two random variables are the performance of
\textit{AdaSample} and HardNet baseline, respectively,
and the null hypothesis is that our method
\textit{cannot} significantly improve the performance.
The \textit{p values} under different experimental settings
are summarized in \tabref{tab:reproducibility}.
With a significance level of $\alpha=5\%$, we can reject the null hypothesis
in 5 of the 6 experiments in total.
For the only anomaly, \ie training on Notredame and testing on Liberty,
we conjecture that the reason lies in the extremely high performance
of the HardNet baseline (about 0.4\% in terms of FPR).
Therefore, we argue that the statistical significance under the
other 5 experimental settings is sufficient to verify the effectiveness of
our approach.

\section{Conclusion}
This paper proposes \textit{AdaSample} for descriptor learning, 
which adaptively samples hard positives to construct informative mini-batches during training.
We demonstrate the efficacy of our method from both theoretical and empirical perspectives.
Theoretically, we give a rigorous definition of \textit{informativeness} of potential training examples.
Then, we reformulate the problem and derive a tractable sampling probability expression (\equref{equ:adasample}) to generate hardness-aware
training triplets. 
Empirically, we enjoy a consistent and statistically significant performance gain on top of the HardNet~\cite{mishchuk2017working} baseline 
when evaluated on various tasks,
including patch classification, patch verification, image matching, and patch retrieval.

\clearpage

{\small
\bibliographystyle{ieee_fullname}
\bibliography{ref}
}

\end{document}